\documentclass[conference]{IEEEtran}

\usepackage{cite}

\usepackage{amsmath,amssymb,amsfonts}
\usepackage{algorithmic}
\usepackage{graphicx}
\usepackage{textcomp}
\usepackage{xcolor}
\usepackage{booktabs,array}
\usepackage{microtype}
\usepackage{dblfloatfix}
\usepackage{placeins}
\usepackage[hidelinks]{hyperref}
\usepackage{url}
\newtheorem{proposition}{Proposition}
\newcommand{\tightlist}{\setlength{\itemsep}{0pt}\setlength{\parskip}{0pt}}
\graphicspath{{figures/}}
\hypersetup{pdfauthor={Chen Yu, Sudarshan S Harithas, Juan Borbon, Dingqi Zhang, Jiuguang Wang}}

\title{Belief-Overlap Partial Inversion: Object-Centric Flow Replanning for Humanoid Loco-Manipulation}
\author{\IEEEauthorblockN{Chen Yu\IEEEauthorrefmark{1},
    Sudarshan S Harithas\IEEEauthorrefmark{2},
    Juan Borbon\IEEEauthorrefmark{2},
    Dingqi Zhang\IEEEauthorrefmark{2},
    Jiuguang Wang\IEEEauthorrefmark{2}}
  \IEEEauthorblockA{\IEEEauthorrefmark{1}Northwestern University, USA\\
    \IEEEauthorrefmark{2}Multiply Labs, USA}
  \thanks{This work was conducted while Chen Yu was an intern at Multiply Labs.}}
\IEEEoverridecommandlockouts
\IEEEaftertitletext{\vspace{-\baselineskip}\begin{minipage}{\textwidth}
\centering
\newlength{\teaserphotogap}
\setlength{\teaserphotogap}{2pt}
\newsavebox{\teaserimage}
\newcommand{\teaserpanelmark}[2]{\llap{\makebox[\wd\teaserimage][l]{\hspace*{#1}\raisebox{3pt}[0pt][0pt]{\begingroup\color{black}\normalfont\fontfamily{phv}\mdseries\fontsize{10}{10}\selectfont
        \makebox[0pt][r]{#2}\endgroup}}}}
\newcommand{\teasertimemark}[2]{\if\relax\detokenize{#2}\relax\else
    \llap{\makebox[\wd\teaserimage][l]{\hspace*{#1}\raisebox{3pt}[0pt][0pt]{\begingroup\color{black}\normalfont\fontfamily{phv}\mdseries\fontsize{7.5}{8}\selectfont
          #2\endgroup}}}\fi
}

\newcommand{\teaserphotodivider}[1]{\llap{\makebox[\wd\teaserimage][l]{\hspace*{#1}\smash{\textcolor{black!9}{\rule{0.5pt}{\ht\teaserimage}}}}}}
\newsavebox{\teasergoalimage}

\sbox{\teaserimage}{\includegraphics[page=1,width=\dimexpr(\textwidth-3\teaserphotogap)/4\relax]
        {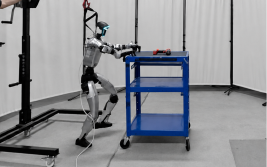}}\usebox{\teaserimage}\teaserpanelmark{\dimexpr\wd\teaserimage-4pt\relax}{A}\hspace{\teaserphotogap}\sbox{\teaserimage}{\includegraphics[width=\dimexpr(\textwidth-3\teaserphotogap)/4\relax]{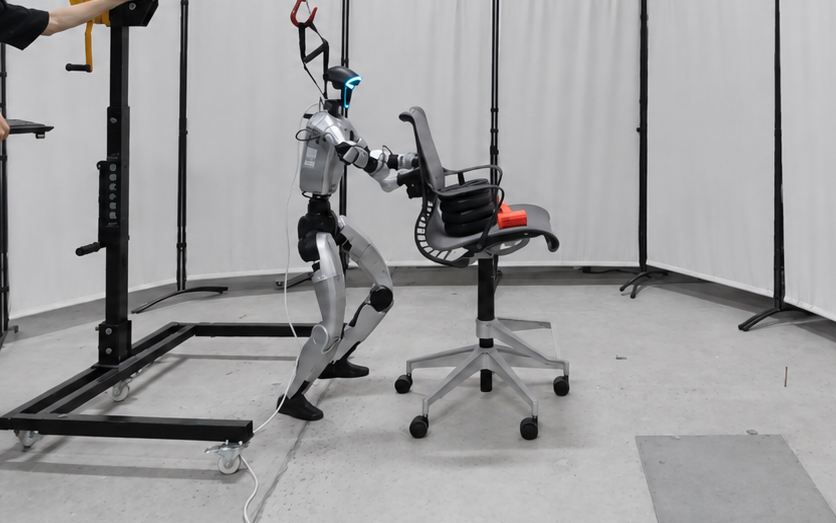}}\usebox{\teaserimage}\teaserpanelmark{\dimexpr\wd\teaserimage-4pt\relax}{B}\hspace{\teaserphotogap}\sbox{\teaserimage}{\includegraphics[page=2,width=\dimexpr(\textwidth-3\teaserphotogap)/4\relax]
        {figures/bopi_cart_photos_clean_cropped.pdf}}\usebox{\teaserimage}\teaserpanelmark{\dimexpr\wd\teaserimage-4pt\relax}{C}\hspace{\teaserphotogap}\sbox{\teaserimage}{\includegraphics[page=3,width=\dimexpr(\textwidth-3\teaserphotogap)/4\relax]
        {figures/bopi_cart_photos_clean_cropped.pdf}}\usebox{\teaserimage}\teaserpanelmark{\dimexpr\wd\teaserimage-4pt\relax}{D}\par
\vspace{\teaserphotogap}\nointerlineskip
\sbox{\teaserimage}{\includegraphics[width=\dimexpr(\textwidth-\teaserphotogap)/2\relax]{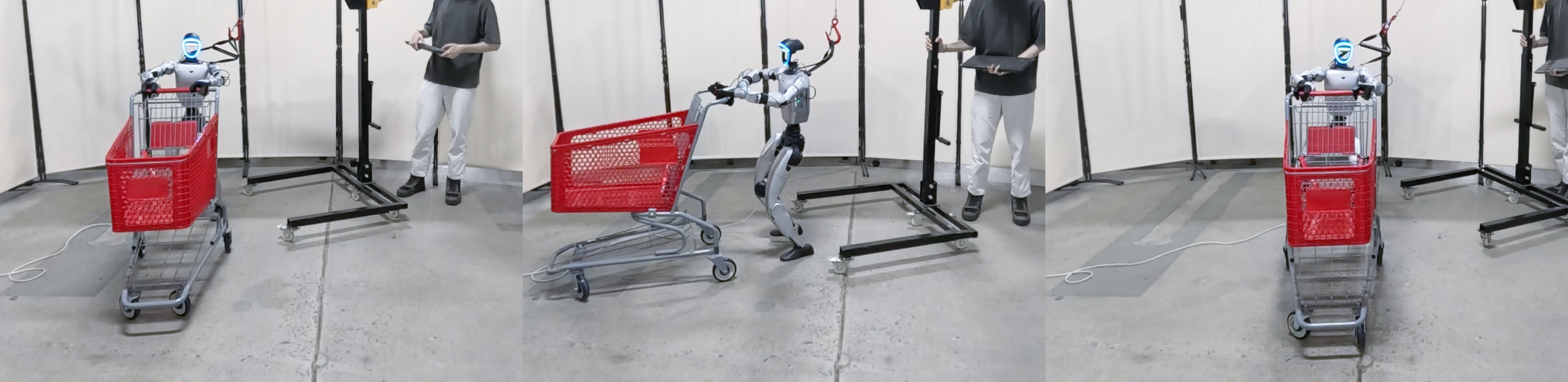}}\usebox{\teaserimage}\teaserphotodivider{\dimexpr\wd\teaserimage/3-0.25pt\relax}\teaserphotodivider{\dimexpr\wd\teaserimage*2/3-0.25pt\relax}\teaserpanelmark{\dimexpr\wd\teaserimage/3-4pt\relax}{E}\teaserpanelmark{\dimexpr\wd\teaserimage*2/3-4pt\relax}{F}\teaserpanelmark{\dimexpr\wd\teaserimage-4pt\relax}{G}\teasertimemark{4pt}{0 s}\teasertimemark{\dimexpr\wd\teaserimage/3+4pt\relax}{12 s}\teasertimemark{\dimexpr\wd\teaserimage*2/3+4pt\relax}{31 s}\sbox{\teasergoalimage}{\includegraphics[width=\dimexpr\wd\teaserimage/3\relax]{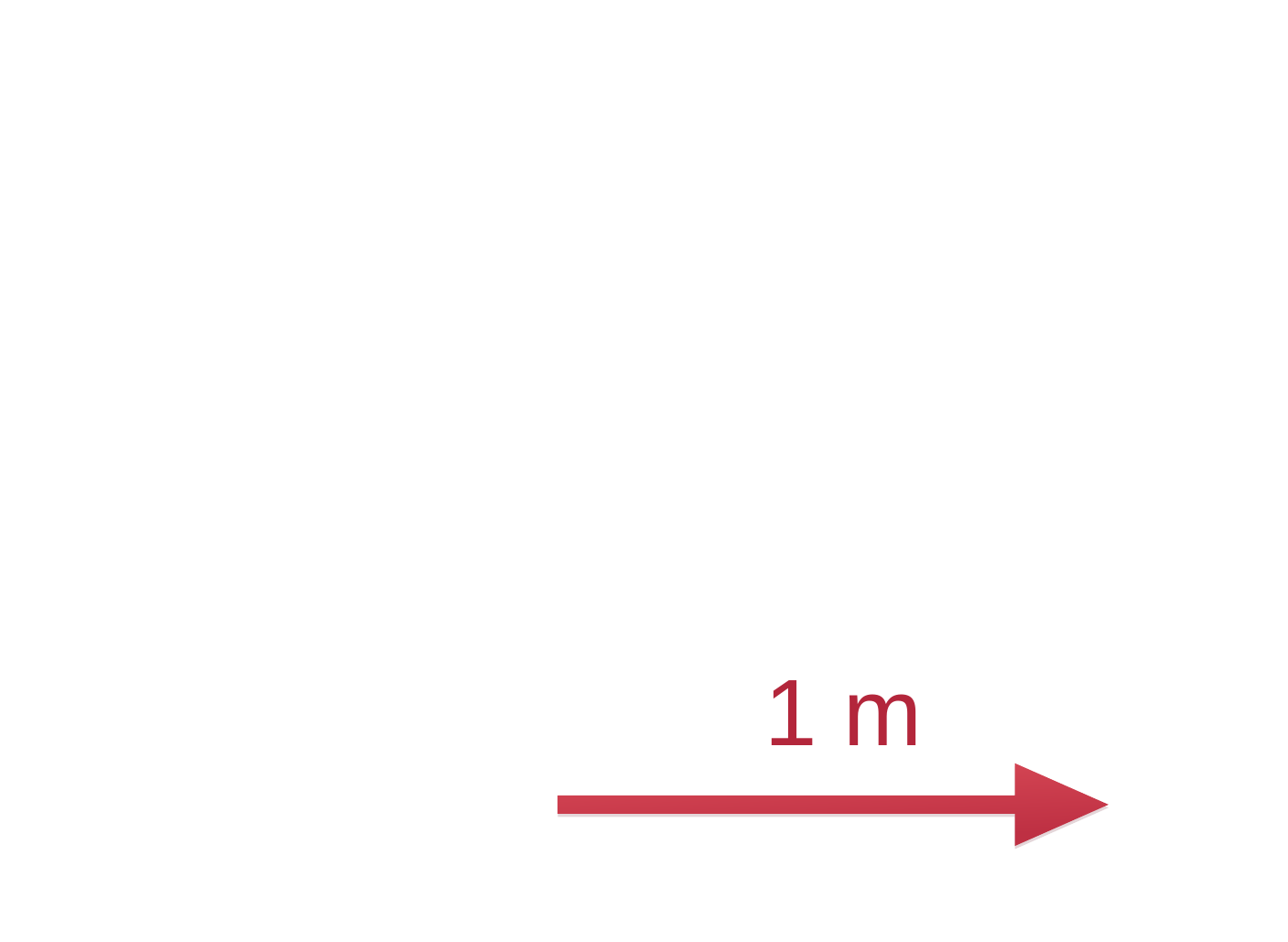}}\llap{\makebox[\wd\teaserimage][l]{\hspace*{0pt}\raisebox{\dimexpr\ht\teaserimage-\ht\teasergoalimage-0pt\relax}[0pt][0pt]{\usebox{\teasergoalimage}}}}\hspace{\teaserphotogap}\sbox{\teaserimage}{\includegraphics[width=\dimexpr(\textwidth-\teaserphotogap)/2\relax]{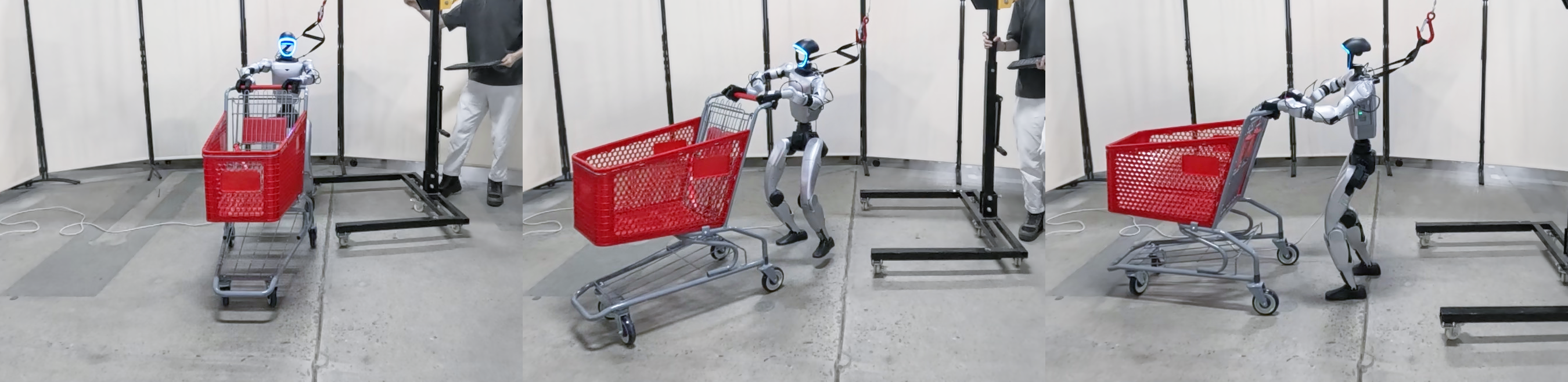}}\usebox{\teaserimage}\teaserphotodivider{\dimexpr\wd\teaserimage/3-0.25pt\relax}\teaserphotodivider{\dimexpr\wd\teaserimage*2/3-0.25pt\relax}\teaserpanelmark{\dimexpr\wd\teaserimage/3-4pt\relax}{H}\teaserpanelmark{\dimexpr\wd\teaserimage*2/3-4pt\relax}{I}\teaserpanelmark{\dimexpr\wd\teaserimage-4pt\relax}{J}\teasertimemark{4pt}{0 s}\teasertimemark{\dimexpr\wd\teaserimage/3+4pt\relax}{6 s}\teasertimemark{\dimexpr\wd\teaserimage*2/3+4pt\relax}{18 s}\sbox{\teasergoalimage}{\includegraphics[width=\dimexpr\wd\teaserimage/3\relax]{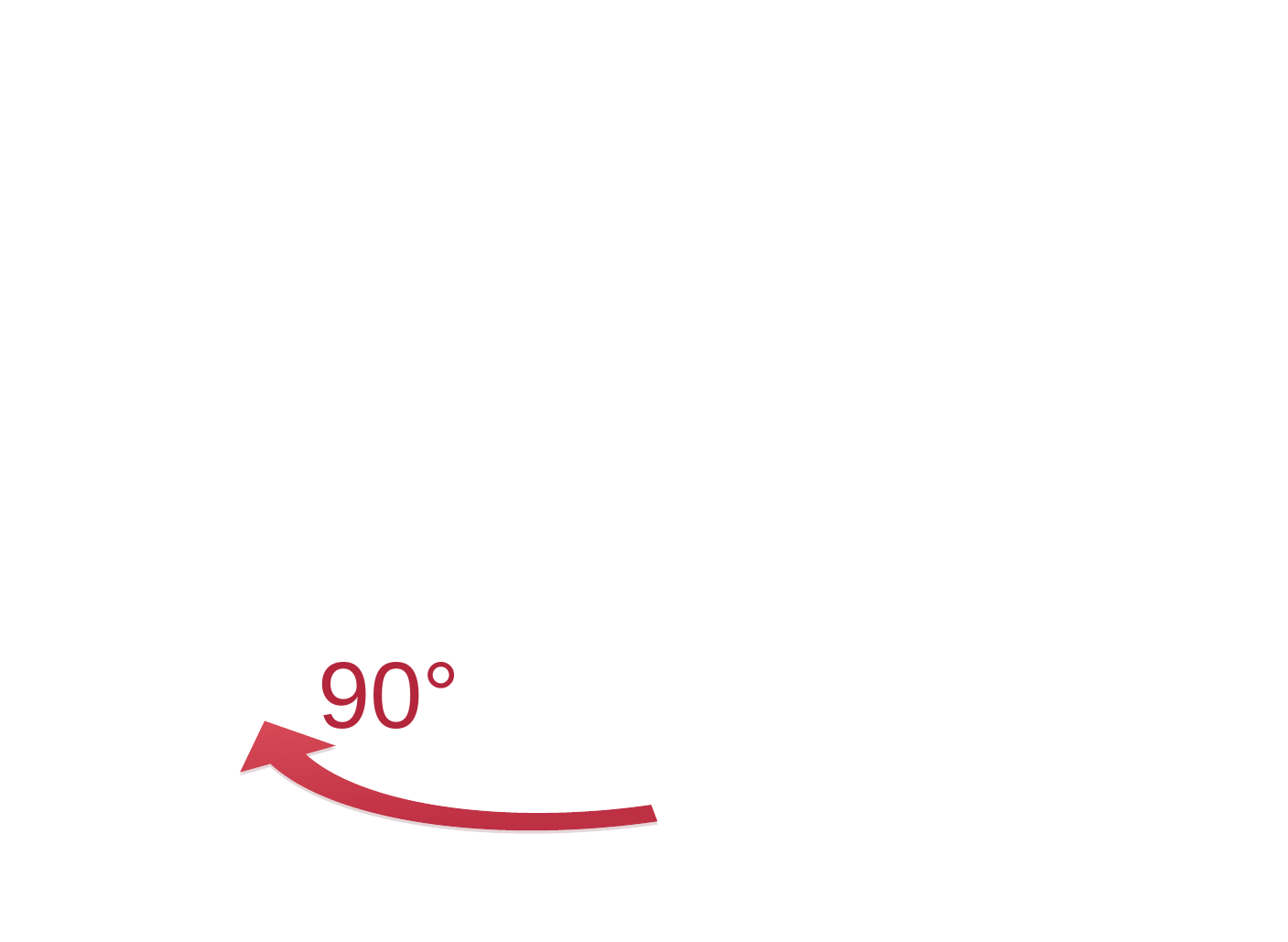}}\llap{\makebox[\wd\teaserimage][l]{\hspace*{0pt}\raisebox{\dimexpr\ht\teaserimage-\ht\teasergoalimage-0pt\relax}[0pt][0pt]{\usebox{\teasergoalimage}}}}\par
\nointerlineskip
\sbox{\teaserimage}{\includegraphics[width=\dimexpr(\textwidth-\teaserphotogap)/2\relax]
      {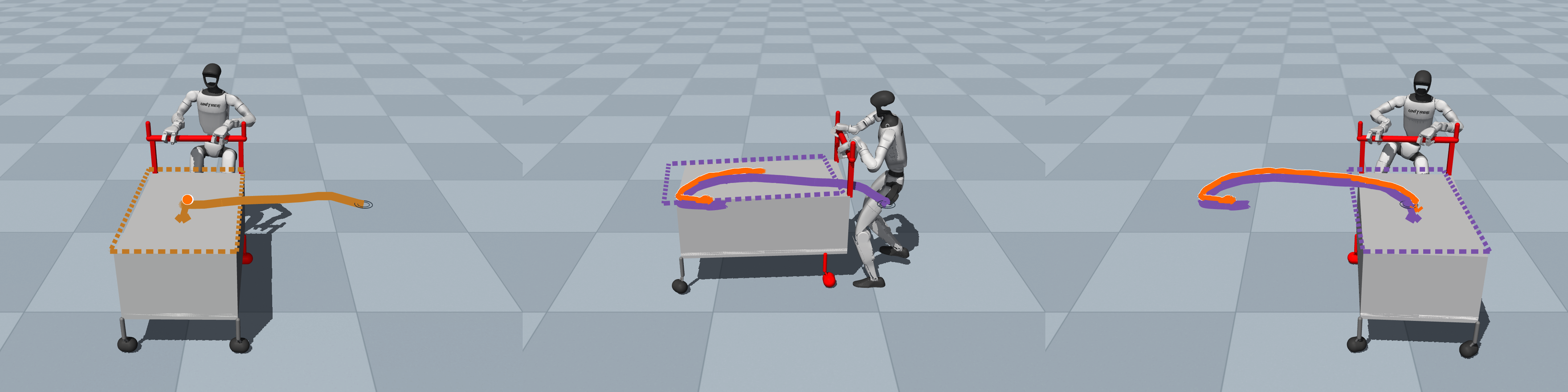}}\usebox{\teaserimage}\llap{\makebox[\wd\teaserimage][l]{\hspace*{\dimexpr3pt-1.5pt\relax}\raisebox{\dimexpr\ht\teaserimage-24pt-1.5pt\relax}[0pt][0pt]{\makebox[0pt][l]{\includegraphics[height=24pt]{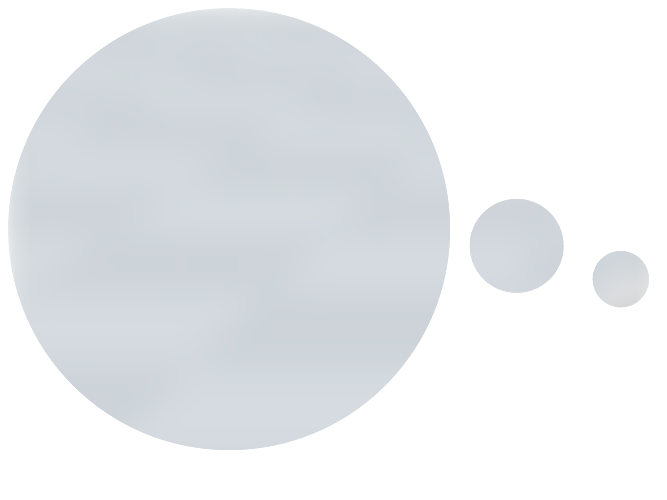}}\makebox[0pt][l]{\hspace*{3.5pt}\raisebox{5.5pt}{\includegraphics[height=15pt]{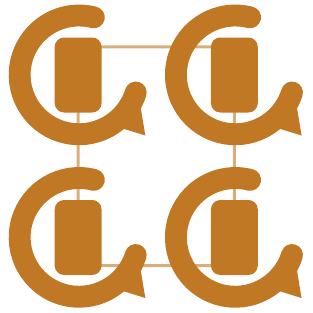}}}\hspace*{15pt}\raisebox{0.5pt}{\includegraphics[height=7pt]{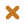}}}}}\llap{\makebox[\wd\teaserimage][l]{\hspace*{\dimexpr\dimexpr\wd\teaserimage/3+3pt\relax-1.5pt\relax}\raisebox{\dimexpr\ht\teaserimage-24pt-1.5pt\relax}[0pt][0pt]{\makebox[0pt][l]{\includegraphics[height=24pt]{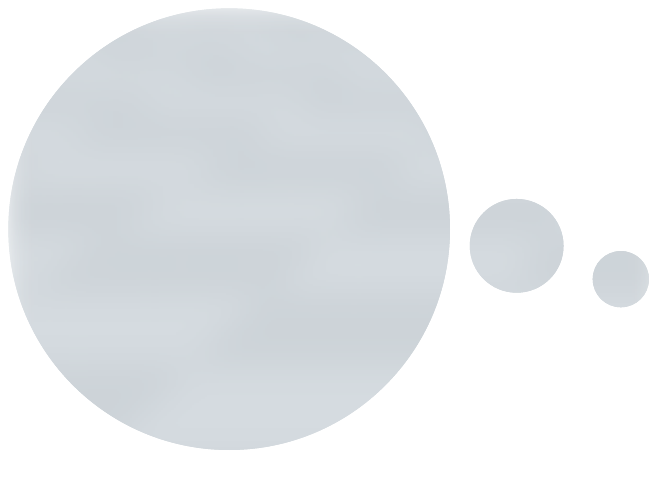}}\makebox[0pt][l]{\hspace*{3.5pt}\raisebox{5.5pt}{\includegraphics[height=15pt]{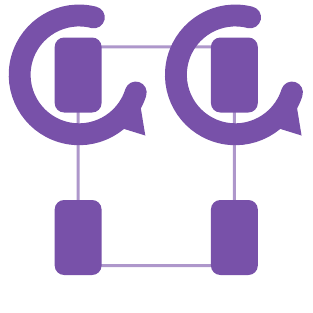}}}\hspace*{15pt}\raisebox{0.5pt}{\includegraphics[height=7pt]{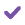}}}}}\llap{\makebox[\wd\teaserimage][l]{\hspace*{\dimexpr\dimexpr\wd\teaserimage*2/3+3pt\relax-1.5pt\relax}\raisebox{\dimexpr\ht\teaserimage-24pt-1.5pt\relax}[0pt][0pt]{\makebox[0pt][l]{\includegraphics[height=24pt]{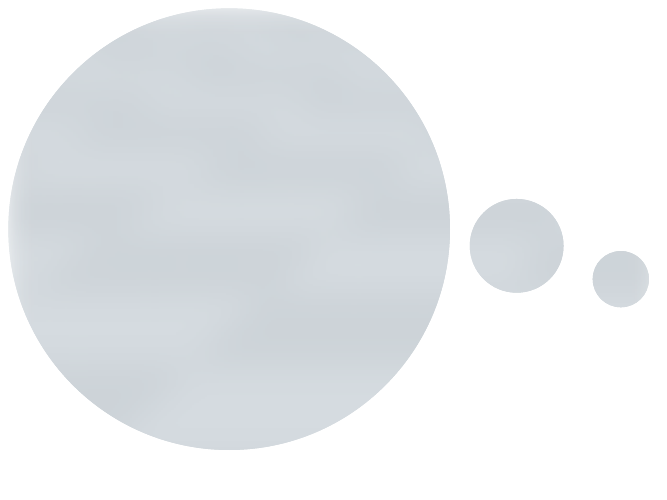}}\makebox[0pt][l]{\hspace*{3.5pt}\raisebox{5.5pt}{\includegraphics[height=15pt]{figures/cart_rear_locked_purple.pdf}}}\hspace*{15pt}\raisebox{0.5pt}{\includegraphics[height=7pt]{figures/cart_status_check.pdf}}}}}\teaserpanelmark{\dimexpr\wd\teaserimage/3-4pt\relax}{K}\teaserpanelmark{\dimexpr\wd\teaserimage*2/3-4pt\relax}{L}\teaserpanelmark{\dimexpr\wd\teaserimage-4pt\relax}{M}\teasertimemark{4pt}{0 s}\teasertimemark{\dimexpr\wd\teaserimage/3+4pt\relax}{12 s}\teasertimemark{\dimexpr\wd\teaserimage*2/3+4pt\relax}{31 s}\hspace{\teaserphotogap}\sbox{\teaserimage}{\includegraphics[width=\dimexpr(\textwidth-\teaserphotogap)/2\relax]
      {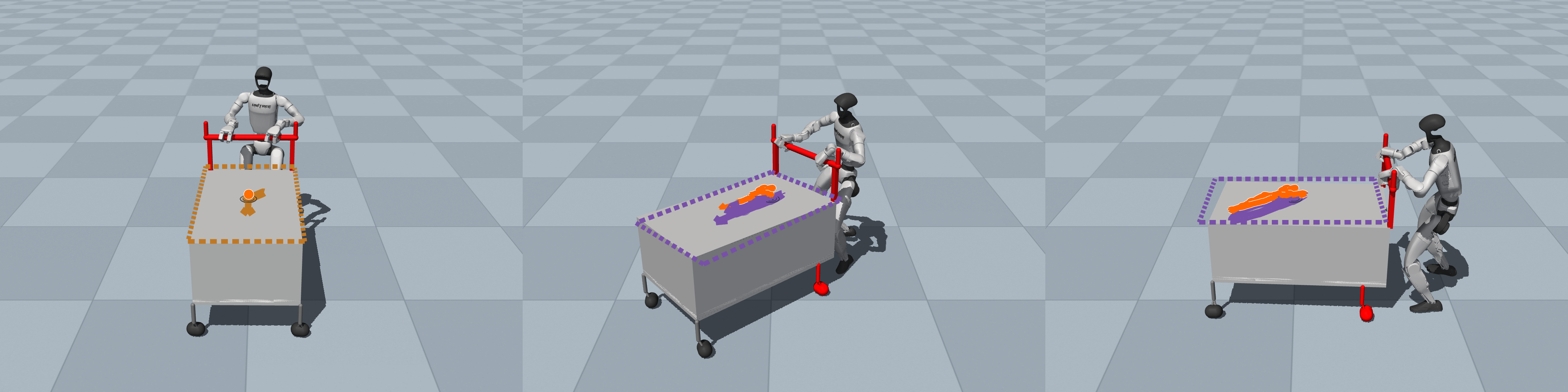}}\usebox{\teaserimage}\llap{\makebox[\wd\teaserimage][l]{\hspace*{\dimexpr3pt-1.5pt\relax}\raisebox{\dimexpr\ht\teaserimage-24pt-1.5pt\relax}[0pt][0pt]{\makebox[0pt][l]{\includegraphics[height=24pt]{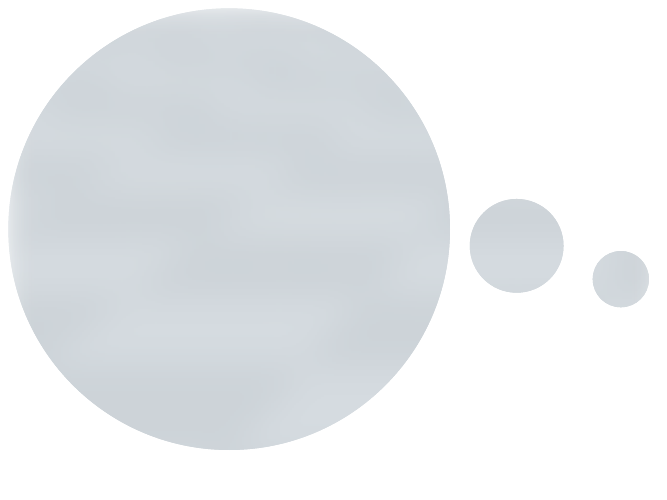}}\makebox[0pt][l]{\hspace*{3.5pt}\raisebox{5.5pt}{\includegraphics[height=15pt]{figures/cart_omni_gold.pdf}}}\hspace*{15pt}\raisebox{0.5pt}{\includegraphics[height=7pt]{figures/cart_status_cross.pdf}}}}}\llap{\makebox[\wd\teaserimage][l]{\hspace*{\dimexpr\dimexpr\wd\teaserimage/3+3pt\relax-1.5pt\relax}\raisebox{\dimexpr\ht\teaserimage-24pt-1.5pt\relax}[0pt][0pt]{\makebox[0pt][l]{\includegraphics[height=24pt]{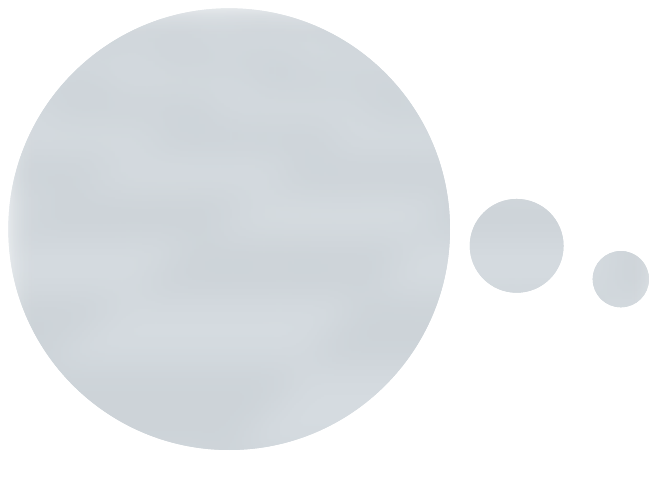}}\makebox[0pt][l]{\hspace*{3.5pt}\raisebox{5.5pt}{\includegraphics[height=15pt]{figures/cart_rear_locked_purple.pdf}}}\hspace*{15pt}\raisebox{0.5pt}{\includegraphics[height=7pt]{figures/cart_status_check.pdf}}}}}\llap{\makebox[\wd\teaserimage][l]{\hspace*{\dimexpr\dimexpr\wd\teaserimage*2/3+3pt\relax-1.5pt\relax}\raisebox{\dimexpr\ht\teaserimage-24pt-1.5pt\relax}[0pt][0pt]{\makebox[0pt][l]{\includegraphics[height=24pt]{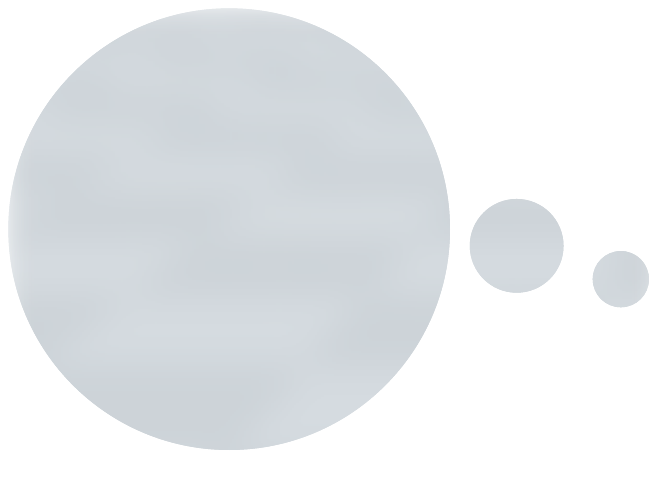}}\makebox[0pt][l]{\hspace*{3.5pt}\raisebox{5.5pt}{\includegraphics[height=15pt]{figures/cart_rear_locked_purple.pdf}}}\hspace*{15pt}\raisebox{0.5pt}{\includegraphics[height=7pt]{figures/cart_status_check.pdf}}}}}\teaserpanelmark{\dimexpr\wd\teaserimage/3-4pt\relax}{N}\teaserpanelmark{\dimexpr\wd\teaserimage*2/3-4pt\relax}{O}\teaserpanelmark{\dimexpr\wd\teaserimage-4pt\relax}{P}\teasertimemark{4pt}{0 s}\teasertimemark{\dimexpr\wd\teaserimage/3+4pt\relax}{6 s}\teasertimemark{\dimexpr\wd\teaserimage*2/3+4pt\relax}{18 s}\par
\makeatletter
\def\@captype{figure}
\makeatother
\caption{\textbf{From versatile object-centric control to adaptive planning.} Our whole-body controller pushes
wheeled objects with different masses, geometries, and holonomic or non-holonomic wheel constraints
along their planned paths (A--D). BOPI uses live estimates of the object's physical properties to revise lateral (E--G) and
turning (H--J) plans for a non-holonomic shopping cart. Corresponding simulations (K--P) illustrate this
adaptation: the planner proposes a \textcolor[HTML]{C07825}{wrong initial plan} based on an omni-cart
assumption. After brief interaction reveals the cart's non-holonomic constraint, BOPI revises it to a
\textcolor[HTML]{7851A9}{correct plan} while the cart continues
moving. Thought bubbles show each plan's assumed wheel constraints. Orange marks the cart's starting position
and executed motion. Videos and code at \href{https://bopi-loco.github.io}{\nolinkurl{bopi-loco.github.io}}.}
\vspace{4mm}
\label{fig:teaser}
\end{minipage}

  \vspace{-0.4em}}

\begin{document}
\maketitle
\vspace{-1.0ex}

\begin{abstract}
People often learn how an unfamiliar object moves through a small exploratory
push, using its response to guide their next motion. Inspired by this behavior, we study humanoids
pushing carts whose mass and caster stiffness are initially unknown. Our object-centric
framework closes online estimation, flow-based planning, and whole-body control into a loop,
creating a tension: replacing the plan whenever the belief shifts disrupts the maneuver under way,
while keeping it strands the robot on a path its evidence has ruled out. Belief-Overlap Partial
Inversion (BOPI) scales revision to agreement, inverting the flow under the old belief only as far
as the beliefs disagree before reconstructing under the new one. Across 18 simulated goals it lowers
pooled final position and yaw error by roughly 33\% and 47\% against fixed-depth replanning, and on
hardware it translates and turns a non-holonomic cart.
\end{abstract}
\section{Introduction}\label{introduction}
\begin{figure*}[!t]
\vspace*{-2mm}
\centering
\includegraphics[width=\textwidth]{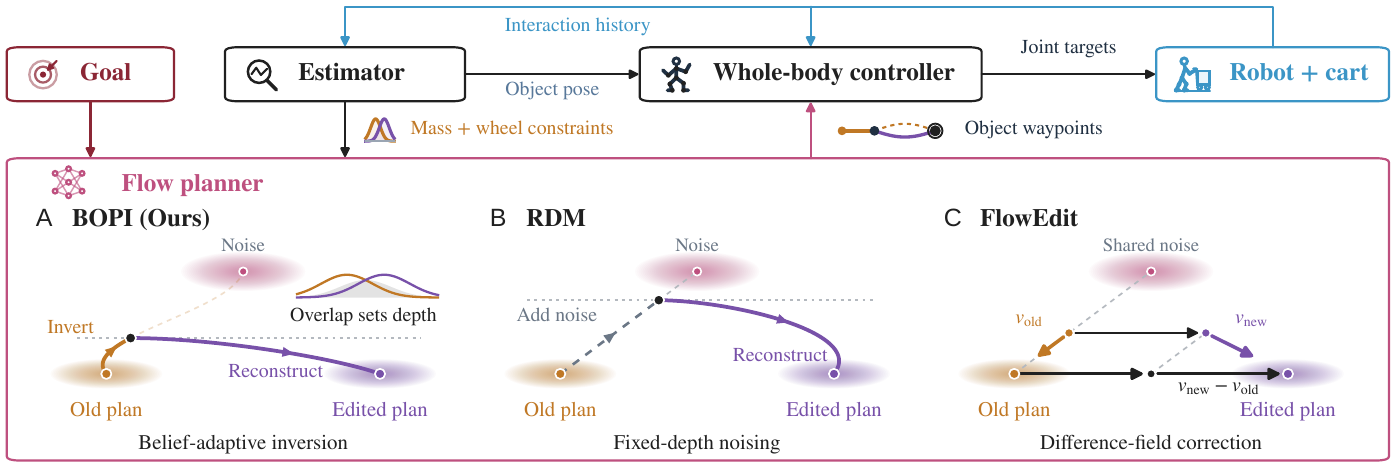}
\caption{\textbf{An object-centric estimation--planning--control framework.} Online estimation informs object planning and \textcolor{black}{whole-body control (trajectory waypoint tracking)}.
As estimates change, the planner can revise the remaining path using
(A) BOPI, which uses belief overlap to decide how far to run the flow backward under the old belief,
then runs it forward under the updated belief; (B) RDM~\cite{ref6}, which blends the current plan with a fixed
proportion of random noise, then runs the flow forward under the updated belief to reconstruct the path; or
(C) FlowEdit~\cite{ref7}, which uses shared noise to evaluate the flow under both beliefs, then edits the path
using the difference between the flow directions under the updated and old beliefs.
Orange/purple denote old/new plans and beliefs.
}
\vspace{-1mm}
\label{fig:bopi-framework}
\end{figure*}

People rarely commit to a way of moving an unfamiliar object before testing it; a short push reveals
whether it rolls freely, resists, or veers off line, and the plan is adjusted before the motion has
finished. We pursue this ability in humanoids pushing carts whose mass and caster stiffness are
unknown at the outset~\cite{ref8,ref1,li2026haic}. What is at stake is the path and not merely the
effort: an omnidirectional cart can be driven laterally to its goal, whereas a constrained rear axle
admits only a curved approach, so the geometry of the maneuver is settled by a property the robot
cannot observe until it is already pushing.

Our framework treats the cart, rather than the robot, as the thing being planned for. A recurrent
estimator maintains a belief over the cart's physical properties, a conditional flow planner
generates a timed object trajectory conditioned on that belief, and a whole-body controller tracks
that trajectory while preserving grasp and balance. Because motion is what produces evidence, the
three components close a loop in which every correction arrives mid-maneuver, and that timing is the
difficulty: regenerating the plan from each new belief, on top of noisy pose estimates, perturbs the
very trajectory the controller is following, while refusing to regenerate leaves the robot committed
to a path its own measurements no longer support. Building on plan-reuse
methods~\cite{ref6,black2025rtc,duan2025rtidp}, we ask:
\emph{how much should a new estimate change the robot's plan?}

\textbf{Belief-Overlap Partial Inversion (BOPI)} answers with the beliefs themselves. As the
replanning component of our framework, it partially reverses plan generation
under the old belief and reconstructs under the updated one, with the depth of reversal set by how
far the two beliefs overlap: when successive beliefs agree the plan is barely disturbed, and as they
diverge more of it is rewritten. The rule requires no per-situation tuning, and the change it
induces in the encoded plan vanishes as the overlap approaches one.

Experiments span a controlled nonlinear benchmark, where BOPI attains higher task success with
smaller cumulative plan changes, and 18 goals in G1--Dex3 simulation. Figure~\ref{fig:teaser} shows
object-centric control across cart types~(\mbox{A--D}), real-robot translation and
turning~(\mbox{E--J}), and simulation replay of recorded plans and updates~(\mbox{K--P}).

Our contributions are: \textbf{(i)} an object-centric estimation--planning--control framework that
adapts cart trajectories while tracking continues; \textbf{(ii)} BOPI, its belief-adaptive
replanning method, with an exact-integration bound under which the encoded
plan change vanishes as belief overlap approaches one; and \textbf{(iii)} simulation and hardware
evaluations showing lower goal error in simulation and lower final position and tracking error on
hardware.
\section{Related work}\label{related-work}

\subsection{Object-centric loco-manipulation}\label{loco-manipulation}
\label{object-centric-loco-manipulation}

Loco-manipulation couples locomotion with manipulation through whole-body
control~\cite{sleiman2021unified,zhang2025falcon,li2026haic,li2026vaic,byrd2026adaptmanip,ref9}. A
recurring move is to state the task on the object rather than on the robot. Mittal et
al.~\cite{mittal2022articulated} plan interactions with articulated objects in unknown scenes;
Omnigrasp~\cite{luo2024omnigrasp} and MaskedManipulator~\cite{tessler2025maskedmanipulator} drive
simulated humanoids from desired object motion; ResMimic~\cite{zhao2025resmimic} adds an
object-conditioned residual to a general motion tracker; and OmniContact~\cite{yu2026omnicontact}
chains contact-level skills for generalizable loco-manipulation. We share the object-level
specification with this line of work and differ in what happens afterwards: the object trajectory is
not fixed once issued but revised as the object's properties are inferred.

\subsection{Cart pushing under unknown dynamics}\label{cart-pushing}

Cart pushing is the classical instance of that coupling. Harada et al.~\cite{harada2004pushing}
derive walking motion for pushing manipulation, Scholz et al.~\cite{scholz2011cartpushing} treat the
cart as a movable obstacle during mobile-manipulator navigation, and Hawley and
Suleiman~\cite{hawley2017cartpushing,ref8} control a humanoid pushing a loaded cart and later
navigate with it under SLAM. Because the cart's response turns on parameters the robot cannot see,
several works estimate them while pushing: Sun et al.~\cite{sun2004cartidentification} identify cart
parameters online, Foster et al.~\cite{ref1} keep inertial estimates physically consistent for
humanoid loco-manipulation, Zhang et al.~\cite{zhang2025maneuverable} optimize the push pose and
reject disturbances without an accurate dynamic model, and Sriganesh et
al.~\cite{sriganesh2026actioninformed} couple estimation with planning for quadrupedal pedipulation.
Our question is orthogonal to how the estimate is formed: we ask what the planner should do with it
once it moves.

\subsection{Cross-embodiment control}\label{cross-embodiment-control}

Differences in physical properties or morphology change a system's dynamics and constraints, the
concern of cross-embodiment
control~\cite{openx2024rtx,yu2023multiembodiment,he2025omnih2o,ben2025homie,yang2025omniretarget}.
Rather than adapting control of the robot itself, we study how the object's path within the
robot--object system should change as its properties are inferred.

\subsection{Adaptive planning}\label{mechanics-aware-generative-planning}
\label{adaptive-trajectory-repair-and-execution}

Once properties are inferred~\cite{yu2017uposi,kumar2021rma,ref2}, the plan has to follow. Diffusion
and flow models~\cite{janner2022diffuser,chi2023diffusionpolicy,lipman2023flow,liu2023rectified,ref4}
generate trajectories that can be regenerated under a new condition, and several methods reuse the
previous solution instead of discarding it. SDEdit~\cite{ref5} noises an input and denoises it under
the new condition; Replanning with Diffusion Models (RDM)~\cite{ref6} applies this to replanning by
adding a fixed proportion of noise to the current plan before reconstruction.
FlowEdit~\cite{ref7} avoids inversion altogether, editing a trajectory by integrating the difference
between the flow fields the old and new conditions induce, while DDIM~\cite{song2021ddim} supplies
the deterministic inversion such methods rely on. A parallel line decides \emph{when} to re-solve
rather than how much of the old solution to keep, through real-time
chunking~\cite{black2025rtc,duan2025rtidp,feng2026dvac} and noise-space
steering~\cite{lu2026unisteer}. What none of these supplies is a principled setting for the amount
of revision: RDM's noise level is a fixed hyperparameter, and FlowEdit's edit follows from the field
difference alone. BOPI ties that amount to the overlap between the beliefs that produced the two
plans, so the plan changes as much as the evidence has.
\section{Method}\label{method}
Our framework separates object planning from whole-body control (Fig.~\ref{fig:bopi-framework}).
The planner specifies where the cart should move and when it should reach each pose. The whole-body \textcolor{black}{controller} follows this trajectory while maintaining balance and hand contact. As interaction
reveals more about the cart, BOPI updates the remaining path to better reach the goal.

\subsection{Object trajectories and the flow planner}\label{problem-formulation-contact-committed-loco-manipulation}

We represent a timed object trajectory as \(\tau=\{(p_h,t_h)\}_{h=0}^{H-1}\). Each of its \(H\)
waypoints records planar position and yaw in \(p_h\), together
with the elapsed physical time \(t_h\). The user specifies a fixed final position and yaw
\(g^{\mathrm W}\) in a world frame maintained by localization. At each replanning step, we express
this goal as \(g\): the remaining displacement and heading change needed to reach it from the
estimated current object pose. Our experiments use \(SE(2)\) goals, but the formulation permits
other goal representations with an appropriately conditioned flow.

An estimator uses interaction history to infer a belief \(q(\theta)\) over a vector
\(\theta\) of object physical properties, such as mass and caster stiffness.
The belief mean \(c(q)=\mu_q\), together with the goal \(g\), conditions the flow model.

A fixed encoder \(A\) converts the trajectory into normalized coordinates \(x=A(\tau)\).
A conditional rectified flow~\cite{lipman2023flow,liu2023rectified} uses flow time
\(\lambda=0\) for noise and \(\lambda=1\) for a plan. The intermediate encoding
\(x_\lambda\) evolves according to the flow field \(v_\varphi\), with network parameters
\(\varphi\), as \(dx_\lambda/d\lambda=v_\varphi(x_\lambda,\lambda\mid g,c(q))\).
We denote integration from flow time \(a\) to \(b\), starting at \(x\) with \(g\) and
\(c(q)\) fixed, by \(\Phi_q^{a\rightarrow b}(x;g)\). A fresh plan's encoding is
\(\Phi_q^{0\rightarrow1}(\epsilon;g)\), where \(\epsilon\sim\mathcal N(0,I)\)
and \(I\) is the identity matrix.

\subsection{Belief-Overlap Partial Inversion}\label{belief-overlap-partial-inversion}

BOPI repairs the remaining plan using the same trained flow. Let \(x_o\) encode this
plan, \(q_o\) denote its stored belief, and \(q_n\) the updated belief.

\paragraph{Overlap determines depth}
We measure agreement between the beliefs with their Bhattacharyya coefficient~\cite{bhattacharyya1946divergence,nielsen2022chernoff}. For continuous densities,
\begin{equation}\label{eq:bopi-overlap}
\begin{aligned}
\rho&=\operatorname{BC}(q_o,q_n)
=\int\sqrt{q_o(\theta)q_n(\theta)}\,d\theta,\\
d&=1-\rho,\qquad \lambda_\star=\rho.
\end{aligned}
\end{equation}
The overlap \(\rho\) lies between zero and one. We stop the backward integration at
\(\lambda_\star=\rho\), so it covers a distance \(d=1-\rho\) in flow time. Similar
beliefs give a shallow reversal; less overlap allows a deeper one. This depth is the squared
Hellinger distance and requires no extra threshold or repair-specific training.

\paragraph{Reverse the old plan, then reconstruct}
We reverse \(x_o\) from flow time \(1\) to \(\rho\) under \(q_o\), obtaining \(z_\rho\).
We then reconstruct from \(z_\rho\) under \(q_n\):
\begin{equation}\label{eq:bopi-repair}
\boxed{z_\rho=\Phi_{q_o}^{1\rightarrow\rho}(x_o;g),\qquad
x_n=\Phi_{q_n}^{\rho\rightarrow1}(z_\rho;g).}
\end{equation}
Both steps use the same \(g\).
The intermediate state is a trajectory encoding, not a physical pose for the robot to visit.
No fresh noise is added. At \(\rho=1\), both integrations have zero length and the plan
stays unchanged. At \(\rho=0\), they perform full inversion and reconstruction; values
between these extremes give partial repair.

\paragraph{Size of a repair}
The following bound describes how much the encoded plan can change under exact integration.
Write \(v_i(\cdot,\lambda)=v_\varphi(\cdot,\lambda\mid g,c(q_i))\) for \(i\in\{o,n\}\).

\begin{proposition}[Exact-flow repair]\label{prop:exact-flow-repair}
Suppose the solutions exist on \([\rho,1]\).
On a common region containing their paths, assume both fields are \(L\)-Lipschitz in encoded
state, bounded in norm by \(V\), and satisfy \(\|v_n-v_o\|_2\le\Delta_v\), uniformly on
\([\rho,1]\), with \(L,V,\Delta_v\ge0\). Then
\begin{equation}\label{eq:bopi-small-repair}
\|x_n-x_o\|_2\le\min\!\left\{2Vd,\;d\Delta_v\exp(Ld)\right\}.
\end{equation}
Here \(L\) bounds sensitivity to the input encoding, \(V\) bounds the field magnitude,
and \(\Delta_v\) bounds the difference caused by changing the belief.
\end{proposition}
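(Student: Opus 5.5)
The plan is to prove the two bounds inside the minimum separately, writing both \(x_o\) and \(x_n\) as ODE solutions on the interval \([\rho,1]\), whose length is \(d=1-\rho\). Define the old trajectory \(y(\lambda)=\Phi_{q_o}^{1\rightarrow\lambda}(x_o;g)\) for \(\lambda\in[\rho,1]\). Since the forward and backward flows of the same field are mutually inverse along existing solutions, \(y(1)=x_o\), \(y(\rho)=z_\rho\), and \(y\) solves \(\dot y=v_o(y,\lambda)\) forward from \(z_\rho\). Define the new trajectory \(w(\lambda)=\Phi_{q_n}^{\rho\rightarrow\lambda}(z_\rho;g)\), so \(w(\rho)=z_\rho\), \(w(1)=x_n\), and \(\dot w=v_n(w,\lambda)\). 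The key observation is that \(y\) and \(w\) start from the \emph{same} point \(z_\rho\) at flow time \(\rho\). Thus the repair is the gap at \(\lambda=1\) between two forward solutions of different fields that share an initial condition.

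\textbf{Crude bound.} By the triangle inequality through \(z_\rho\),
\[
\|x_n-x_o\|_2\le\|w(1)-z_\rho\|_2+\|y(1)-z_\rho\|_2 .
\]
Each term is the norm of an integral of a field bounded by \(V\) over an interval of length \(d\). Each is therefore at most \(Vd\), which gives \(2Vd\). This uses only the norm bound on the common region that contains both paths.

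\textbf{Grönwall bound.} Let \(e(\lambda)=w(\lambda)-y(\lambda)\), so \(e(\rho)=0\). Split the field difference as
\[
v_n(w)-v_o(y)=\bigl[v_n(w)-v_n(y)\bigr]+\bigl[v_n(y)-v_o(y)\bigr].
\]
The first bracket is at most \(L\|e\|_2\) by the Lipschitz assumption, and the second is at most \(\Delta_v\). Integrating from \(\rho\) gives
\[
\|e(\lambda)\|_2\le\int_\rho^\lambda \bigl(L\|e(s)\|_2+\Delta_v\bigr)\,ds .
\]
The integral form of Grönwall's inequality then yields
\[
\|e(\lambda)\|_2\le \frac{\Delta_v}{L}\bigl(e^{L(\lambda-\rho)}-1\bigr),
\]
which reduces to \(\Delta_v(\lambda-\rho)\) when \(L=0\). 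Finally, apply the elementary inequality \((e^{u}-1)/u\le e^{u}\) for \(u\ge0\), which follows from the mean value theorem, with \(u=Ld\). This gives \(\|x_n-x_o\|_2\le d\Delta_v e^{Ld}\), and taking the smaller of the two bounds gives the stated minimum.

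The main obstacle is the bookkeeping that makes the Grönwall step legitimate. One needs \(y\) to be a genuine forward solution from \(z_\rho\), that is, that backward integration is exactly invertible. One also needs both paths to stay in the common region where the Lipschitz, norm, and difference bounds hold. Both points are covered by the hypotheses that the solutions exist on \([\rho,1]\) and that the bounds hold on a region containing their paths, so I would state them explicitly rather than derive them. I would also note that \(\|e\|_2\) need not be differentiable at zero. This is why I use the integral form of Grönwall rather than a differential inequality.
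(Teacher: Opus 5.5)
Your proposal is correct and follows the paper's proof essentially step for step: the $2Vd$ bound comes from each integration traveling at most $Vd$, and the Gr\"onwall bound controls the gap between the old and new forward solutions that share the starting point $z_\rho$, giving $\Delta_v(e^{Ld}-1)/L\le d\Delta_v e^{Ld}$. The differences are cosmetic: you use the integral form of Gr\"onwall, which avoids differentiating $\|e\|_2$, where the paper states $e'\le Le+\Delta_v$ almost everywhere, and you apply $(e^{u}-1)/u\le e^{u}$ where the paper bounds the integrand $e^{L(1-s)}$ by $e^{Ld}$.
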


\begin{IEEEproof}
Each integration travels at most \(Vd\) in encoded space, giving the first
bound, \(\|x_n-x_o\|_2\le2Vd\). For the second bound,
run the old flow forward from \(z_\rho\). By uniqueness,
this returns \(x_o\). The new flow starts from the same \(z_\rho\), so the distance
\(e(\lambda)\) between the two forward solutions satisfies \(e(\rho)=0\) and
\(e'\le Le+\Delta_v\) almost everywhere. Gr\"onwall's inequality gives
\[
\|x_n-x_o\|_2=e(1)\le\Delta_v\int_\rho^1 e^{L(1-s)}\,ds
\le d\Delta_v e^{Ld}.
\]
Taking the smaller of the two bounds proves Eq.~\eqref{eq:bopi-small-repair}.
\end{IEEEproof}

Under these uniform bounds, the change in the encoded plan approaches zero as belief
overlap approaches one.

\subsection{Applying updates while tracking continues}\label{online-deployment-contact-consistent-suffix-application}

While tracking continues, the estimator updates the belief \(q_n\).
Before repair, we resample the remaining path into \(H\) waypoints
\(p_h\) in the current estimated cart frame, matching the flow model's input representation.
BOPI revises the remaining timed path using the stored belief \(q_o\) and
updated belief \(q_n\), following Eq.~\eqref{eq:bopi-repair}, while keeping the final goal
\(g^{\mathrm W}\) unchanged. We preserve the first four resampled waypoints to reduce abrupt changes when updating path.

\subsection{Cart-pushing implementation}\label{planning-and-control-implementation}

\paragraph{Belief representation}
Gated recurrent unit (GRU) ensembles predict the mean and uncertainty of
rear-caster stiffness \(k_{\mathrm r}\) and cart mass \(m_{\mathrm{cart}}\) in standardized
log coordinates: \(\log k_{\mathrm r}\) and \(\log(1+m_{\mathrm{cart}})\).
Here \(k_{\mathrm r}\) denotes rear-caster swivel stiffness
(\(\mathrm{N\,m/rad}\)): small values allow the rear wheels to steer freely,
while large values approximate a locked rear axle.
We average the property estimates across models.
Let \(q_{i,j}=\mathcal N(\mu_{i,j},\sigma_{i,j}^2)\) denote the old
(\(i=o\)) or updated (\(i=n\)) belief for property \(j\).
Its mean is \(\mu_{i,j}\) and its standard deviation is \(\sigma_{i,j}>0\).
We compute \(\rho=\prod_j\operatorname{BC}(q_{o,j},q_{n,j})\) using the analytic
Gaussian overlap~\cite{nielsen2022chernoff}.

\paragraph{Generating training trajectories}
For each physical-property setting \(\theta\), an analytic teacher optimizes a sequence
of cart-frame controls \(\mathbf u\) (forward/lateral velocity and yaw rate) and duration \(T\):
\begin{equation}\label{eq:teacher-objective}
\min_{\mathbf u,T}\quad T+\sum_r w_r C_r(\mathbf u,T;\theta),
\end{equation}
subject to the goal pose, specified boundary velocities, and limits on motion and force/torque.
Each cost term \(C_r\), indexed by \(r\), sums a normalized squared quantity over time: predicted hand force, force rate,
acceleration, lateral velocity, yaw rate, or sideways velocity at an axle. The weights satisfy
\(w_r\ge0\). Mass changes the force and torque needed to accelerate the cart. Larger caster
stiffness increases the penalty on sideways axle motion and reduces its allowed magnitude.

\paragraph{Training the flow}
We convert each teacher trajectory into \(H=32\) timed waypoints
\((p_h,t_h)\) for flow training.
A temporal U-Net~\cite{janner2022diffuser,chi2023diffusionpolicy}, using GroupNorm~\cite{wu2018groupnorm} and FiLM conditioning~\cite{perez2018film}, predicts the flow field.

We use the standard flow-matching loss~\cite{lipman2023flow,liu2023rectified},
with training interpolation \(x_\lambda=(1-\lambda)\epsilon+\lambda x\).
The expectation is over training examples \((x,g,c(q))\) and independent
\(\epsilon\sim\mathcal N(0,I)\), \(\lambda\sim\mathcal U[0,1]\):
\begin{equation}\label{eq:flow-training}
\mathcal L_{\mathrm{FM}}
=\mathbb E\!\left[
\|v_\varphi(x_\lambda,\lambda\mid g,c(q))-(x-\epsilon)\|_2^2
\right].
\end{equation}
We also add a condition-smoothness loss to encourage similar flow predictions for nearby
physical-property inputs. With \(c=c(q)\), we perturb only \(c\), keeping
\(x_\lambda\), \(\lambda\), and \(g\) fixed:
\begin{equation}\label{eq:flow-smoothness}
\begin{aligned}
\mathcal L_{\mathrm{smooth}}=\mathbb E\Bigl[\bigl\|
&v_\varphi(x_\lambda,\lambda\mid g,c+\delta)\\
&-v_\varphi(x_\lambda,\lambda\mid g,c)\bigr\|_2^2\Bigr].
\end{aligned}
\end{equation}
Here \(\delta\sim\mathcal N(0,\sigma_c^2 I)\), with \(\sigma_c\simeq0.0041\) in
standardized physical-property coordinates. The total loss is
\(\mathcal L=\mathcal L_{\mathrm{FM}}+0.1\mathcal L_{\mathrm{smooth}}\).
All compared methods use this same trained flow.

\paragraph[Object-centric whole-body control]{\textcolor{black}{Object-centric whole-body control}}
The \textcolor{black}{controller} maps incoming timed waypoints \((p_h,t_h)\)
and robot observations to \textcolor{black}{targets for all joints except the fingers}.
Inputs include joint positions and velocities, base angular velocity, gravity direction,
previous actions, gait phase, hand targets, hand and handle geometry, and \textcolor{black}{an eight-step observation
history}.

We train the \textcolor{black}{controller} with PPO in mjlab~\cite{schulman2017ppo,zakka2026mjlab}.
Its main task rewards encourage accurate cart position and heading:
\begin{equation}\label{eq:tracker-reward}
r_{\mathrm{track}}
=15\exp\!\left(-\frac{\|e_p\|_2^2}{0.15^2}\right)
+20\exp\!\left(-\frac{e_\psi^2}{0.30^2}\right)\mathpunct{.}
\end{equation}
Here \(e_p\) and \(e_\psi\) measure planar position (m) and wrapped heading (rad)
differences from the current target interpolated from the timed waypoints \((p_h,t_h)\).
Supporting terms encourage stable grasping, balance, wheel contact, and terminal settling,
while penalizing excessive speed and sideways wheel slip. We combine these with
FALCON's locomotion rewards~\cite{zhang2025falcon} and will release the complete reward
specification and training configuration with our code.

\section{Experiments}\label{experiments}
\subsection{Evaluation protocol and baselines}\label{evaluation-protocol-and-baselines}

We compare BOPI with four baselines:
\begin{itemize}
\tightlist
\item \textbf{RDM} adds noise to the old plan at a fixed depth, then reconstructs
  it under the updated belief~\cite{ref6} (Fig.~\ref{fig:bopi-framework}B).
\item \textbf{FlowEdit} performs inversion-free repair using the difference between new-
  and old-conditioned flow fields evaluated at paired noisy states~\cite{ref7} (Fig.~\ref{fig:bopi-framework}C).
\item \textbf{Fresh FM} generates a complete plan from fresh Gaussian noise under the updated belief.
\item \textbf{Retain} keeps the initial plan unchanged.
\end{itemize}

\begin{figure}[!t]
\centering
\includegraphics[width=\columnwidth]{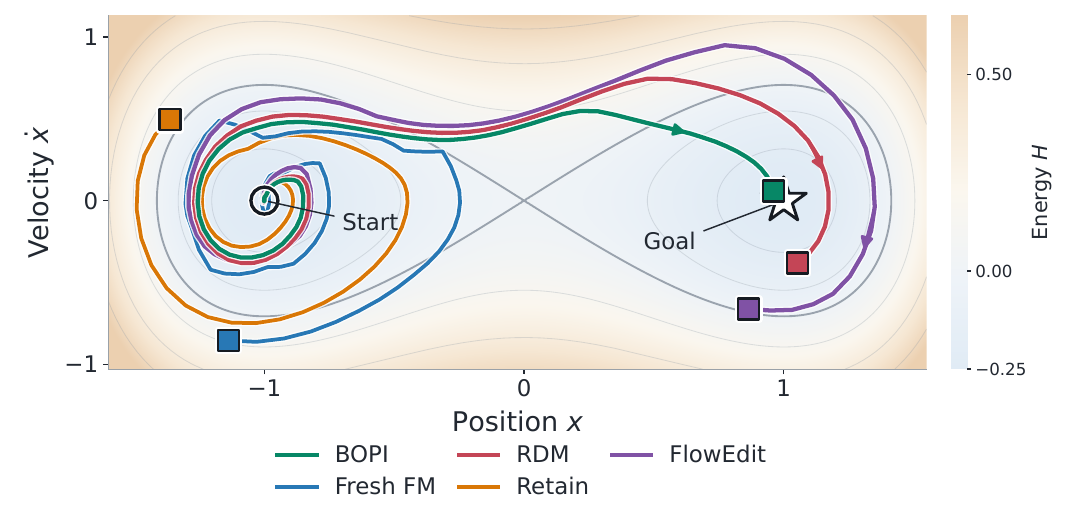}
\caption{\textbf{Planning in a nonlinear toy problem.} At 10~s (squares), BOPI is
near the goal at low speed; RDM and FlowEdit overshoot with higher residual speeds.
Background color shows mechanical energy \(H\).}
\label{fig:duffing-phase-portrait}
\end{figure}

\begin{table}[!t]
\caption{Toy problem results.}
\label{tab:results-1}
\centering\footnotesize
\setlength{\tabcolsep}{3pt}
\begin{tabular*}{\columnwidth}{@{\extracolsep{\fill}}lrrrrr@{}}
\toprule
Method & \shortstack{Success\\(\%)} & \shortstack{Position\\error} & \shortstack{Final speed\\error} & \shortstack{Cumulative\\mutation} & \shortstack{Strategy\\switches} \\
\midrule
\textbf{BOPI (Ours)} & \(\mathbf{54.9\pm14.6}\) & \textbf{0.189} & \textbf{0.116} & \textbf{0.704} & \textbf{2.144} \\
RDM & \(40.0\pm6.1\) & 0.283 & 0.211 & 2.212 & 8.617 \\
FlowEdit & \(12.6\pm3.7\) & 1.043 & 0.367 & 4.026 & 9.260 \\
Fresh FM & \(4.0\pm0.9\) & 1.567 & 0.422 & 23.631 & 21.045 \\
Retain & \(2.4\pm0.3\) & 2.090 & 0.491 & N/A & N/A \\
\bottomrule
\end{tabular*}
\end{table}

\begin{figure}[!b]
\centering
\includegraphics[width=\columnwidth]{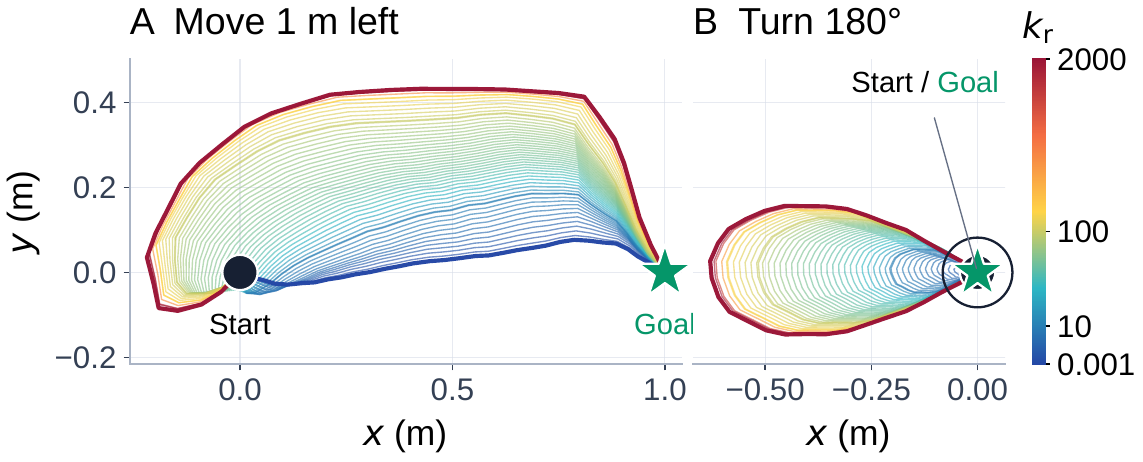}
\caption{\textbf{Flow-generated plans adapt to wheel constraints.}
With the same initial flow noise, the flow generates different paths as rear-caster stiffness changes:
move one meter left (A), or turn \(180^\circ\) and return to the starting position (B).
Color indicates stiffness; bold curves mark its extremes.}
\label{fig:flow-mechanics-family}
\end{figure}

\subsection{Controlled toy problem}\label{controlled-toy-problem}

\paragraph{Setup}\label{duffing-cross-well-benchmark-design}
We study plan repair on a controlled Duffing oscillator~\cite{suhardjo1992duffing},
\(\ddot x=x-x^3-\beta\dot x+0.43u\), with position \(x\), velocity \(\dot x\),
control \(u\in[-1,1]\), and damping \(\beta\sim\mathcal U[0.04,0.42]\).
Flows learn optimized controls for moving from \((x,\dot x)=(-1,0)\) to \((1,0)\)
over 10 seconds. Each trial starts with a teacher plan for \(\beta=0.04\).
Scripted equal-variance Gaussian beliefs first correct the damping, then fluctuate with
correlated noise (belief/noise standard deviation \(0.03\), correlation \(0.95\)).
At 4 Hz, methods revise the full plan but execute only future controls.
RDM uses the fixed midpoint flow restart time \(\lambda=0.50\).
Terminal success requires \(|x-1|<0.12\) and \(|\dot x|<0.15\).
We evaluate each method on 256 trials per flow across four independently trained flows.

\paragraph{Results}\label{cross-well-results}
An incorrect damping estimate makes the initial plan unreliable: retaining it succeeds in only
2.4\% of trials, compared with 97.2\% when planning with true damping. BOPI improves success
over fixed-depth RDM by \(14.8\pm10.4\) percentage points, while reducing cumulative plan
changes by 68.2\% and final speed error by 45.1\% (Table~\ref{tab:results-1}).
Figure~\ref{fig:duffing-phase-portrait} shows a representative successful BOPI rollout.

\subsection{Belief-conditioned flow generation}\label{flow-model-results}

In the toy problem, beliefs about damping guide control generation; for cart pushing, beliefs
about mass and wheel constraints guide the generation of timed cart trajectories.
Figure~\ref{fig:flow-mechanics-family} \textcolor{black}{shows how wheel constraints change the paths:
nearly free rear casters allow} direct lateral motion \textcolor{black}{and} near-in-place rotation,
\textcolor{black}{whereas a constrained rear axle produces curved lateral paths and wider turning maneuvers}.
\textcolor{black}{Beyond path shape, the plans also adapt their timing: for example, at}
\(m_{\mathrm{cart}}=18.3\,\mathrm{kg}\), \textcolor{black}{increasing rear-caster stiffness extends}
the lateral plan's duration from \(5.11\) to \(9.42\,\mathrm{s}\).
\textcolor{black}{Separately, the increase in mass belief from \(3.3\) to \(18.3\,\mathrm{kg}\)
at \(k_{\rm r}=2000\) increases in the lateral plan length by \(8.5\,\mathrm{cm}\).}

\begin{figure}[!t]
\centering
\includegraphics[width=\columnwidth]{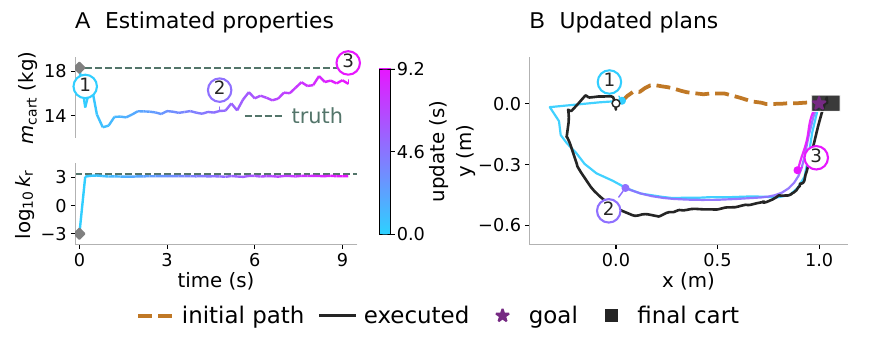}
\caption{\textbf{From property estimates to updated plans.}
As estimates of cart dynamic properties evolve (A), BOPI revises the initial near-straight path into a curved maneuver (B).
Matching colors and numbers link estimates to plan updates; black shows the executed motion.}
\label{fig:estimator-to-plan}
\end{figure}

\begin{figure}[t]
\centering
\includegraphics[width=\columnwidth]{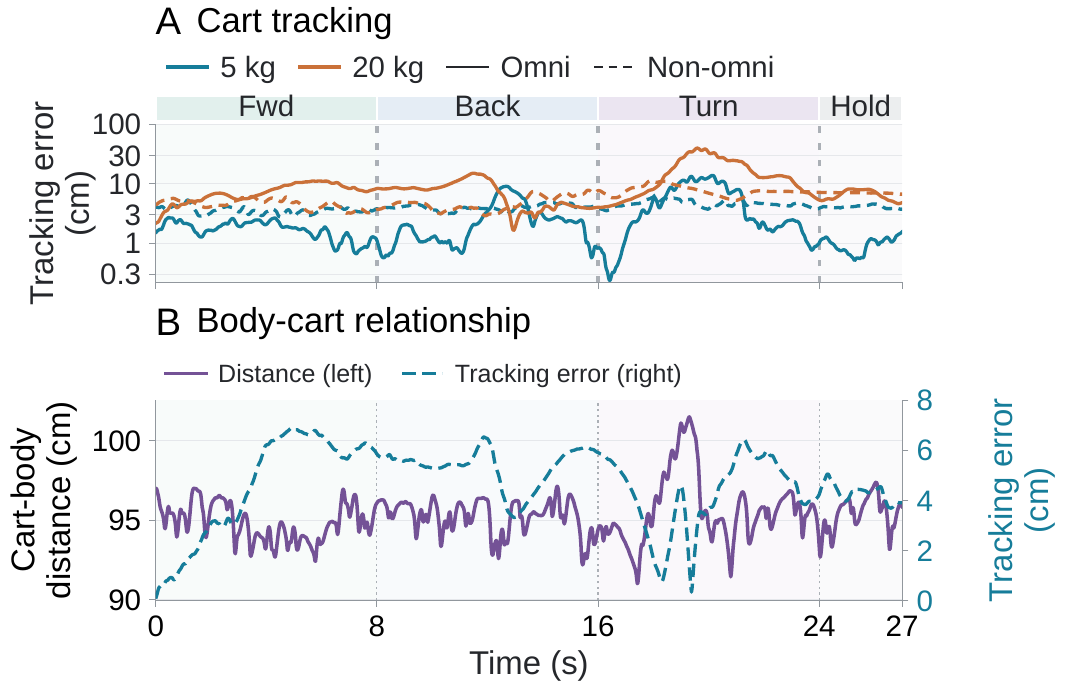}
\caption{\textbf{Accurate cart tracking with a changing body--cart distance.}
We design a path covering common maneuvers: \(0.8\,\mathrm{m}\) forward travel,
\(0.6\,\mathrm{m}\) reverse travel without changing heading, and a \(45^\circ\) rolling turn.
Panel A compares tracking across masses and wheel constraints. In an example execution with a
\(20\,\mathrm{kg}\) cart (B), body--cart distance changes during turning while tracking error
remains low, illustrating that the robot can follow the object's path without maintaining a fixed body--cart offset.}
\label{fig:tracker-maneuvers}
\end{figure}

\begin{table*}[!t]
\caption{Simulation results: end-to-end planning. Entries are mean final position error (m) / yaw error (rad).}
\label{tab:results-2}
\centering\small
\setlength{\tabcolsep}{7pt}
\begin{tabular}{@{}lrrrrr@{}}
\toprule
Task category & \textbf{BOPI (Ours)} & RDM & FlowEdit & Fresh FM & Retain \\
\midrule
Translation & \textbf{0.349 / 0.163} & 0.625 / 0.561 & 0.637 / 0.590 & 0.870 / 1.039 & 0.927 / 0.330 \\
Rotation & \textbf{0.109 / 0.150} & 0.134 / 0.195 & 0.251 / 0.693 & 0.754 / 1.209 & 0.286 / 0.306 \\
Translation + rotation & \textbf{0.209 / 0.205} & 0.296 / 0.336 & 0.240 / 0.228 & 0.835 / 1.165 & 0.495 / 0.329 \\
Translation + rotation (wrong mass) & \textbf{0.105 / 0.039} & 0.163 / 0.139 & 0.277 / 0.129 & 1.353 / 1.114 & 0.352 / 0.115 \\
\textbf{Pooled} & \textbf{0.179 / 0.152} & 0.267 / 0.286 & 0.295 / 0.349 & 0.936 / 1.149 & 0.465 / 0.277 \\
\bottomrule
\end{tabular}
\end{table*}

\begin{figure*}[!tp]
\centering
\includegraphics[width=1.0\textwidth]{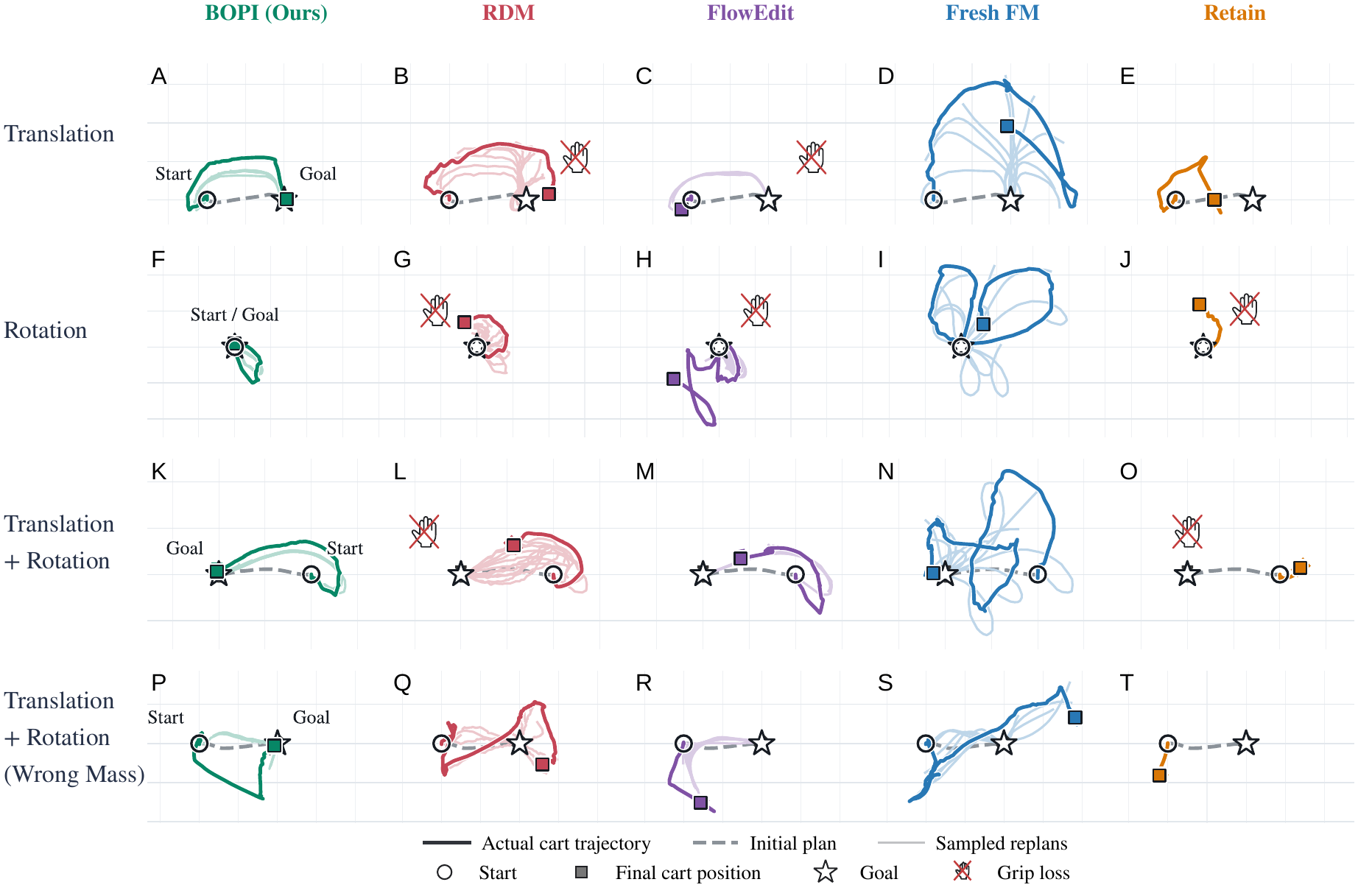}
\caption{\textbf{BOPI maintains goal-directed motion during replanning in simulation.}
\textcolor{black}{Planning examples span translation (A--E), rotation (F--J), combined translation--rotation (K--O),
and combined motion with an incorrect initial mass estimate (P--T).
BOPI makes smaller changes between successive plans, producing consistent replanned trajectories
as the robot progresses toward the goal. This consistency helps explain the lower final position
and yaw errors reported in Table~\ref{tab:results-2}.}
Grid spacing is \(0.5\,\mathrm{m}\) in both directions.}
\label{fig:replanning-executions}
\vspace{-1.6mm}
\end{figure*}

\subsection{Humanoid cart-pushing simulation}\label{humanoid-cart-pushing-simulation}

\subsubsection{Simulation setup}\label{simulation-setup}

We evaluate all methods in the mjlab~\cite{zakka2026mjlab} G1--Dex3 environment, using a
whole-body \textcolor{black}{controller} while the robot grasps the cart with both hands. At each reset, we randomize the robot
posture and handle height over a range of \(0.78\)--\(1.00\,\mathrm{m}\), then use inverse
kinematics to place both hands on the handle. During \textcolor{black}{controller} training, we also sample
episodes without a cart so that the deployed policy remains stable when contact is
not established.

\subsubsection{Belief estimator}\label{belief-estimator}

The belief estimator predicts cart mass and rear-wheel constraint from interaction history.
After four seconds of interaction, the estimator distinguishes free from constrained rear wheels
with \(91.5\%\)--\(98.4\%\) accuracy and estimates cart mass with approximately
\(1\,\mathrm{kg}\) mean absolute error on held-out trajectories.

Figure~\ref{fig:estimator-to-plan} illustrates how updated estimates guide BOPI's path repairs.
In the illustrated rollout, the cart reaches the goal with mean position and yaw errors of
\(0.036\,\mathrm{m}\) and \(0.007\,\mathrm{rad}\).

\subsubsection[Object-centric whole-body control]{\textcolor{black}{Object-centric whole-body control}}\label{whole-body-tracker}

Across masses and wheel constraints, our \textcolor{black}{controller} follows cart waypoints with a mean
RMS position error of \(6.53\,\mathrm{cm}\) over 8 trials (Fig.~\ref{fig:tracker-maneuvers}A).
Object-centric tracking allows the robot to adjust its position relative to the cart during turning while continuing
to track the cart's waypoints (Fig.~\ref{fig:tracker-maneuvers}B).

\subsubsection{End-to-end planning results}\label{simulation-results}

\textcolor{black}{We evaluate 18 goals (32 seeds each): one-meter left/right translations;
\(\pm60^\circ\) or \(\pm90^\circ\) rotations ending at the starting position;
one-meter left/right/forward/backward translations with \(\pm90^\circ\) turns;
and one-meter left/right translations with \(\pm45^\circ\) turns under an incorrect mass belief.}
\textcolor{black}{Directions are relative to the initial pushing direction.}
The first three categories incorrectly assume freely steering rear wheels\textcolor{black}{;} \textcolor{black}{wrong}-mass tasks use
an omnidirectional cart with an assumed mass of \(3.3\,\mathrm{kg}\) instead of \(18.3\,\mathrm{kg}\).
\textcolor{black}{We compute cart position and yaw errors from ground-truth poses against the fixed
goal \(g^{\mathrm W}\), averaging over each rollout's final two seconds and then across all rollouts.}
\textcolor{black}{Replanning is scheduled} at 5~Hz\textcolor{black}{;}
RDM's restart time \(\lambda=0.25\) was selected by a development sweep.

\textcolor{black}{Table~\ref{tab:results-2} shows that retaining the incorrect initial plan leaves substantial final position and yaw errors
(\(0.465\,\mathrm{m}\) / \(0.277\,\mathrm{rad}\)), while repeatedly replacing it with Fresh FM
yields even larger errors (\(0.936\,\mathrm{m}\) / \(1.149\,\mathrm{rad}\)).
BOPI instead scales revision to belief agreement, achieving the lowest errors in every category
(\(0.179\,\mathrm{m}\) / \(0.152\,\mathrm{rad}\) overall,
approximately \(33\%\) / \(47\%\) below RDM and \(39\%\) / \(56\%\) below FlowEdit).
Fig.~\ref{fig:replanning-executions} provides further evidence: BOPI makes smaller changes between successive plans,
producing consistent replanned trajectories as the robot progresses toward the goal.
Together, these results favor adapting the plan to updated estimates while limiting unnecessary
changes to the path the robot is following.}

\begin{table*}[!t]
\begin{minipage}[t]{\dimexpr(\textwidth-\columnsep)/2\relax}
\caption{Real-robot results: translation.}
\label{tab:real-robot-results}
\centering\small
\setlength{\tabcolsep}{1pt}
\renewcommand{\arraystretch}{1.15}
\begin{tabular*}{\linewidth}{@{\extracolsep{\fill}}lrrrr@{}}
\toprule
Method & \shortstack{Position\\error\\(cm)} & \shortstack{Yaw\\error\\($^\circ$)} & \shortstack{Position\\tracking RMS\\(cm)} & \shortstack{Yaw\\tracking RMS\\($^\circ$)} \\
\midrule
\textbf{BOPI (Ours)} & \shortstack[r]{\(\mathbf{6.48}\)\\\(\mathbf{\pm 3.48}\)} & \shortstack[r]{\(1.64\)\\\(\pm 1.52\)} & \shortstack[r]{\(\mathbf{7.20}\)\\\(\mathbf{\pm 1.41}\)} & \shortstack[r]{\(\mathbf{3.89}\)\\\(\mathbf{\pm 0.59}\)} \\
RDM & \shortstack[r]{\(7.35\)\\\(\pm 2.98\)} & \shortstack[r]{\(\mathbf{1.17}\)\\\(\mathbf{\pm 1.23}\)} & \shortstack[r]{\(8.98\)\\\(\pm 1.93\)} & \shortstack[r]{\(3.98\)\\\(\pm 0.60\)} \\
FlowEdit & \shortstack[r]{\(30.64\)\\\(\pm 40.41\)} & \shortstack[r]{\(17.33\)\\\(\pm 31.83\)} & \shortstack[r]{\(16.44\)\\\(\pm 13.47\)} & \shortstack[r]{\(6.73\)\\\(\pm 5.06\)} \\
Fresh FM & \shortstack[r]{\(43.32\)\\\(\pm 29.40\)} & \shortstack[r]{\(21.08\)\\\(\pm 16.04\)} & \shortstack[r]{\(13.01\)\\\(\pm 3.70\)} & \shortstack[r]{\(6.60\)\\\(\pm 1.36\)} \\
Retain & \shortstack[r]{\(85.48\)\\\(\pm 12.11\)} & \shortstack[r]{\(5.37\)\\\(\pm 4.63\)} & \shortstack[r]{\(40.58\)\\\(\pm 5.25\)} & \shortstack[r]{\(7.22\)\\\(\pm 0.85\)} \\
\bottomrule
\end{tabular*}
\end{minipage}\hfill
\begin{minipage}[t]{\dimexpr(\textwidth-\columnsep)/2\relax}
\caption{Real-robot results: turning.}
\label{tab:real-robot-rotation}
\centering\small
\setlength{\tabcolsep}{1pt}
\renewcommand{\arraystretch}{1.15}
\begin{tabular*}{\linewidth}{@{\extracolsep{\fill}}lrrrr@{}}
\toprule
Method & \shortstack{Position\\error\\(cm)} & \shortstack{Yaw\\error\\($^\circ$)} & \shortstack{Position\\tracking RMS\\(cm)} & \shortstack{Yaw\\tracking RMS\\($^\circ$)} \\
\midrule
\textbf{BOPI (Ours)} & \shortstack[r]{\(\mathbf{5.07}\)\\\(\mathbf{\pm 2.47}\)} & \shortstack[r]{\(4.51\)\\\(\pm 1.63\)} & \shortstack[r]{\(\mathbf{10.20}\)\\\(\mathbf{\pm 2.24}\)} & \shortstack[r]{\(\mathbf{5.05}\)\\\(\mathbf{\pm 0.79}\)} \\
RDM & \shortstack[r]{\(9.83\)\\\(\pm 3.89\)} & \shortstack[r]{\(\mathbf{3.11}\)\\\(\mathbf{\pm 1.68}\)} & \shortstack[r]{\(16.05\)\\\(\pm 4.51\)} & \shortstack[r]{\(6.43\)\\\(\pm 0.79\)} \\
FlowEdit & \shortstack[r]{\(8.29\)\\\(\pm 1.42\)} & \shortstack[r]{\(3.89\)\\\(\pm 1.32\)} & \shortstack[r]{\(13.80\)\\\(\pm 1.43\)} & \shortstack[r]{\(5.85\)\\\(\pm 0.56\)} \\
Fresh FM & \shortstack[r]{\(8.80\)\\\(\pm 2.81\)} & \shortstack[r]{\(4.01\)\\\(\pm 1.50\)} & \shortstack[r]{\(11.71\)\\\(\pm 1.17\)} & \shortstack[r]{\(5.70\)\\\(\pm 0.49\)} \\
Retain & \shortstack[r]{\(22.98\)\\\(\pm 16.24\)} & \shortstack[r]{\(7.96\)\\\(\pm 1.90\)} & \shortstack[r]{\(25.15\)\\\(\pm 6.63\)} & \shortstack[r]{\(7.86\)\\\(\pm 0.52\)} \\
\bottomrule
\end{tabular*}
\end{minipage}
\end{table*}

\begin{figure*}[!tp]
\centering
\includegraphics[width=1.0\textwidth,trim=0 43bp 0 11bp,clip]{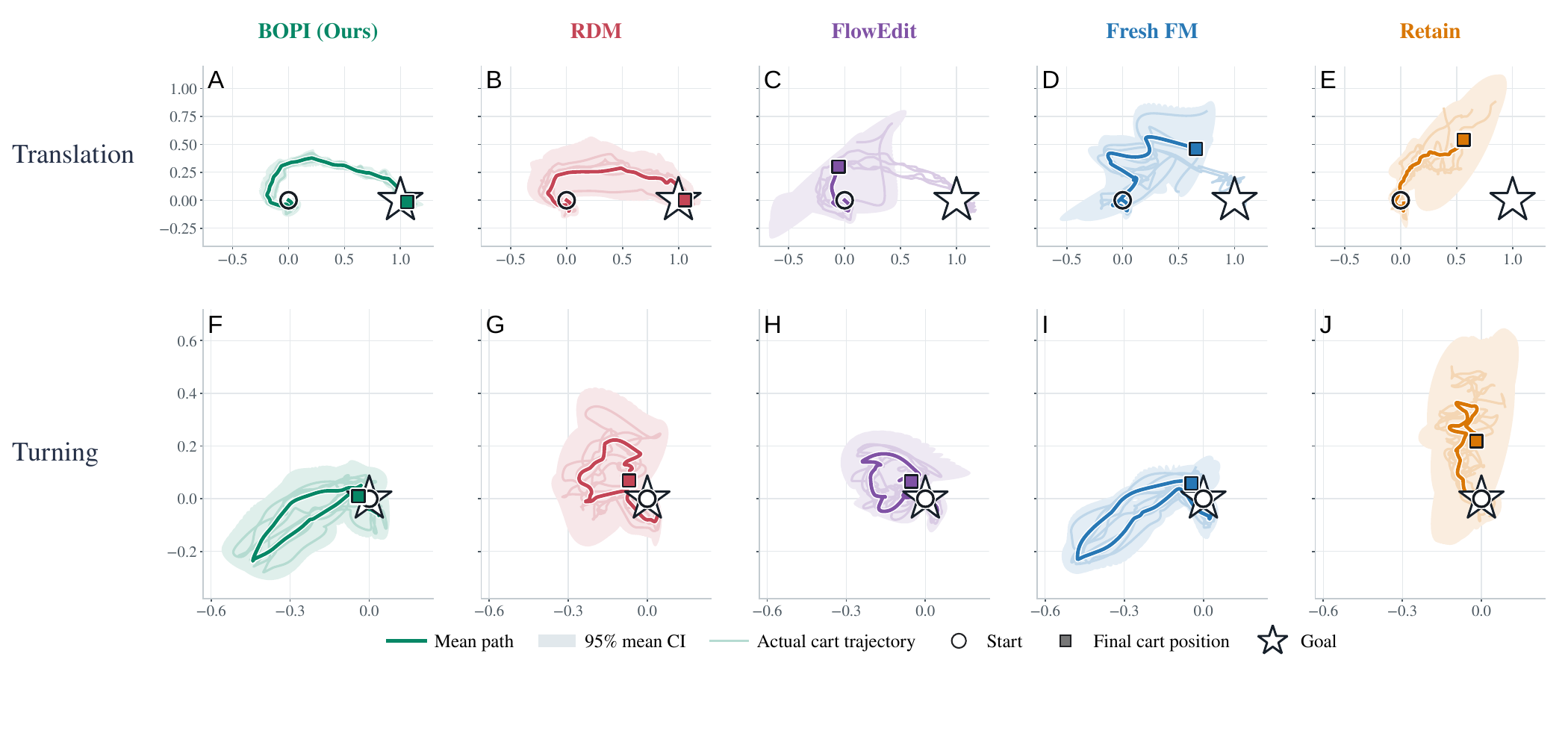}
\caption{\textbf{BOPI produces consistent trajectories for a non-holonomic cart in real-robot experiments.}
The goals are one-meter lateral motion (A--E) and a \(90^\circ\) clockwise turn returning
to the starting position (F--J). Corresponding BOPI photos appear in Fig.~\ref{fig:teaser} (E--J).
Light curves show five estimated cart trajectories; solid curves and shading show their
time-aligned mean and pointwise \(95\%\) confidence regions.
Squares mark the mean at each trajectory's end.
All paths share display-only smoothing (\(\sigma=0.20\,\mathrm{s}\)).
Tables~\ref{tab:real-robot-results} and~\ref{tab:real-robot-rotation} compare complete trials.}
\label{fig:real-robot-trajectories}
\end{figure*}

\subsubsection{Adaptive-repair ablation}\label{adaptive-repair-ablation}

\textcolor{black}{We further test a key design choice: whether to first invert the current plan
(\emph{flow inversion}) or directly add noise, as in RDM (\emph{forward noising}).}
Both use belief overlap to set the depth and reconstruct under the updated belief.
Across the same 18 goals with 64 seeds, forward noising increases
mean final position and yaw errors by \(18.2\%\) and \(20.9\%\), respectively,
compared with flow inversion.
\textcolor{black}{This suggests that BOPI benefits not only from adapting the revision depth,
but also from reversing the old plan through the learned flow rather than perturbing it with fresh noise.}

\subsection{Real-robot experiments}\label{real-robot-experiments}

Figure~\ref{fig:teaser} (A--D) shows adaptable object-centric control across wheeled objects.
We test two goals on a non-holonomic shopping cart: move one meter left
(\mbox{Fig.~\ref{fig:teaser}E--G}), or turn \(90^\circ\) clockwise and return to the starting position
(\mbox{Fig.~\ref{fig:teaser}H--J}), with five trials per method and goal.
All methods start from the same plan, generated under the incorrect assumption
that the rear casters swivel freely.
For hardware safety, we reduce the replanning rate to 1~Hz.

Tables~\ref{tab:real-robot-results} and~\ref{tab:real-robot-rotation} report mean
\(\pm\) sample standard deviation. Final errors measure distance and heading difference to the goal;
tracking RMS measures deviation from the commanded cart trajectory. Metrics use odometry estimates.

Retaining the incorrect initial plan leads to large final position errors
(\(54.23\,\mathrm{cm}\) on average). Fresh FM uses updated estimates but repeatedly replaces
the plan, exhibiting more variable paths (Fig.~\ref{fig:real-robot-trajectories}D,I) and
excessive leaning and sideways leg extension (Fig.~\ref{fig:fresh-fm-failures}).
BOPI instead scales revision to belief agreement, achieving lower mean final position error
and both tracking RMS errors than RDM and FlowEdit on both tasks
(mean final position error:
\(5.78\,\mathrm{cm}\) versus \(8.59\,\mathrm{cm}\) for RDM and \(19.47\,\mathrm{cm}\) for FlowEdit).
Together, these results highlight the value of incorporating new evidence while limiting
disruption to ongoing motion when manipulating objects with initially unknown physical properties.

\begin{figure}[t]
\centering
\newsavebox{\badplanimage}

\sbox{\badplanimage}{\includegraphics[width=\dimexpr(\columnwidth-2pt)/2\relax]
    {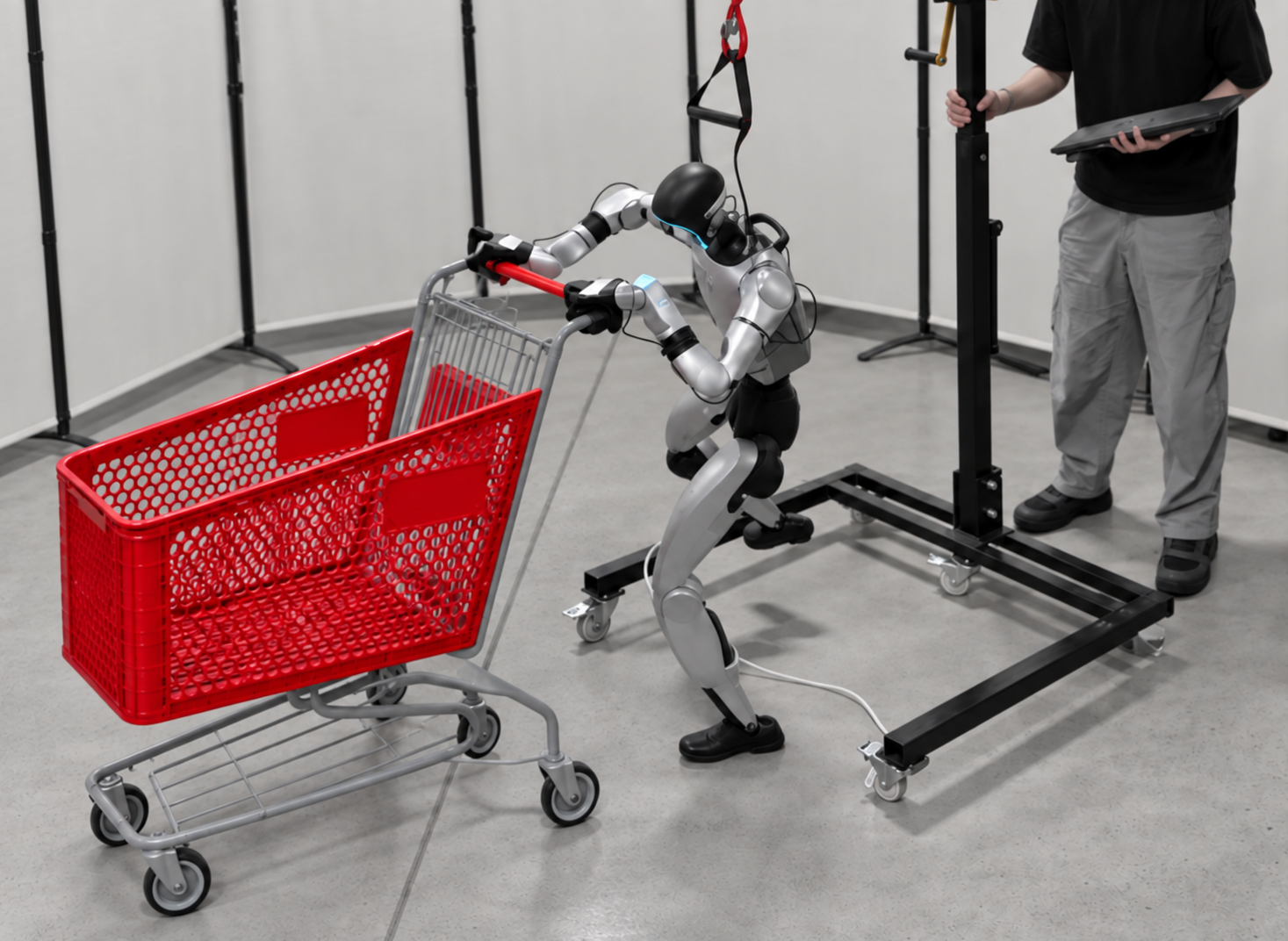}}\usebox{\badplanimage}\llap{\raisebox{3pt}[0pt][0pt]{\normalfont\fontfamily{phv}\mdseries\fontsize{10}{10}\selectfont
    \textcolor{black}{A}\hspace{4pt}}}\hspace{2pt}\sbox{\badplanimage}{\includegraphics[width=\dimexpr(\columnwidth-2pt)/2\relax]
    {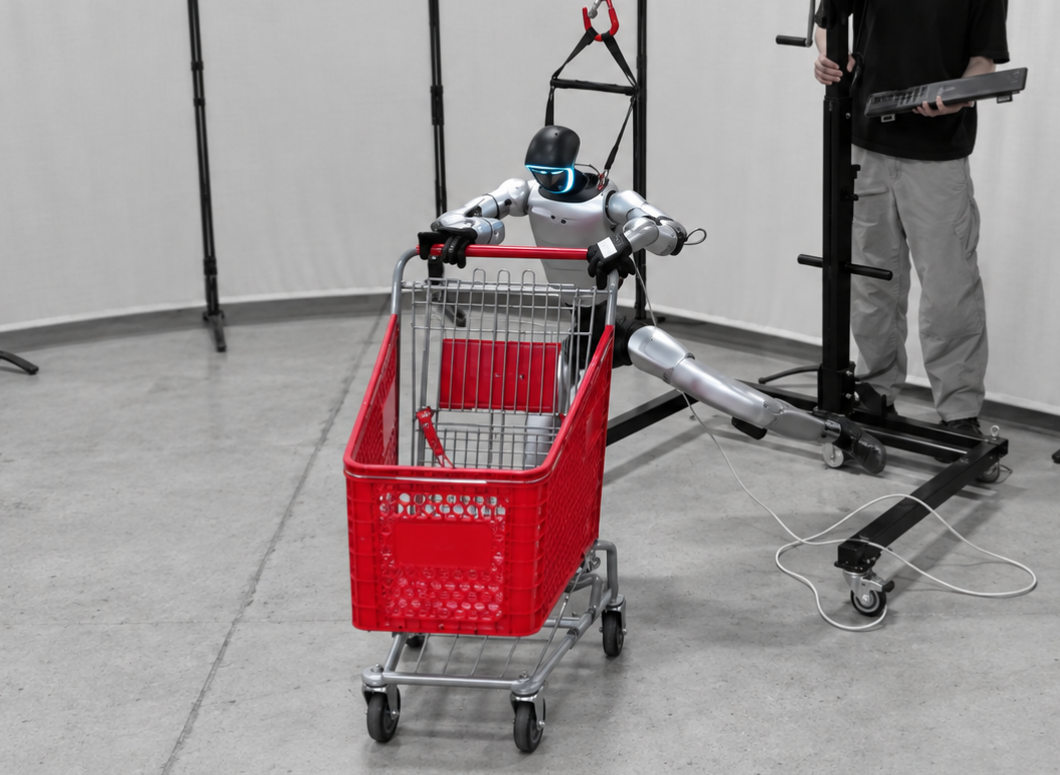}}\usebox{\badplanimage}\llap{\raisebox{3pt}[0pt][0pt]{\normalfont\fontfamily{phv}\mdseries\fontsize{10}{10}\selectfont
    \textcolor{black}{B}\hspace{4pt}}}
\caption{\textbf{Unstable executions of the Fresh FM baseline.}
Excessive leaning (A) and sideways leg extension (B) during lateral pushing illustrate
how replanning can disrupt whole-body motion.
}
\label{fig:fresh-fm-failures}
\end{figure}
\FloatBarrier
\section{Conclusion}\label{conclusion}
We presented an object-centric framework for humanoid cart pushing that combines online property estimation, flow-based planning, and whole-body control. BOPI, its replanning component, uses belief overlap to determine how far to reverse plan generation before reconstructing the timed waypoints under updated estimates. This lets the planner incorporate new evidence while the controller follows the object’s path and adjusts the robot’s body configuration.
Experiments support both the repair mechanism and its use in closed-loop loco-manipulation. BOPI achieves higher success with smaller plan changes in the controlled benchmark and reduces mean final position and yaw errors by approximately 33\% and 47\% relative to fixed-depth RDM across 18 simulated goals.
On a real non-holonomic cart,
BOPI performs translation and turning with lower mean final position and tracking errors than the compared baselines.
Together, these results support our central insight: effective replanning should incorporate new evidence while preserving useful parts of the current plan.
Extending the framework beyond planar cart pushing to three-dimensional goals and non-wheeled objects is a next step toward humanoids that learn to manipulate any unfamiliar objects through interaction.

\vspace{\baselineskip}
\renewcommand{\IEEEbibitemsep}{0.7pt plus 0.25pt}

\end{document}